\newcommand{\R}{\mathbb{R}}
\newcommand{\N}{\mathbb{N}}
\theoremstyle{plain}
\newtheorem{theorem}{Theorem}[section]
\newtheorem{proposition}[theorem]{Proposition}
\theoremstyle{definition}
\theoremstyle{remark}
\icmltitlerunning{Selective Sinkhorn Routing for Improved Sparse Mixture of Experts}
\begin{document}

\twocolumn[
  \icmltitle{Selective Sinkhorn Routing for Improved Sparse Mixture of Experts}

  \icmlsetsymbol{equal}{*}
  \icmlsetsymbol{huu-intern}{\textdagger}
  \icmlsetsymbol{duc-partial}{\textdaggerdbl}

  \begin{icmlauthorlist}
    \icmlauthor{Duc Anh Nguyen}{comp}
    \icmlauthor{Huu Binh Ta}{uva,huu-intern}
    \icmlauthor{Duc Nhuan Le}{comp,duc-partial}
    \icmlauthor{Tan Minh Nguyen}{comp,yyy}
    \icmlauthor{Toan Tran}{comp}
  \end{icmlauthorlist}

  \icmlaffiliation{comp}{Qualcomm AI Research. Qualcomm AI Research is an initiative of Qualcomm Technologies, Inc.}
  \icmlaffiliation{yyy}{Department of Mathematics, National University of Singapore}
  \icmlaffiliation{uva}{University of Virginia, USA}

  \icmlcorrespondingauthor{Duc Anh Nguyen}{nguyenducanh909.bkhn@gmail.com}

  \icmlcorrespondingauthor{Toan Tran}{toatran@qti.qualcomm.com}

  \icmlkeywords{Machine Learning, ICML}

  \vskip 0.3in
]



\printAffiliationsAndNotice{\textsuperscript{\textdagger}Work done at Qualcomm AI Research. \textsuperscript{\textdaggerdbl}Partially done at Ho Chi Minh City University of Science, VNU-HCM, Vietnam.}

\begin{abstract}
Sparse Mixture-of-Experts (SMoE) models are scalable and computationally efficient, enabling large increases in model capacity with limited inference overhead. Existing SMoE methods often depend on auxiliary objectives, such as load-balancing loss and z-loss, or additional trainable components such as noisy gating. While these techniques encourage expert diversity, they can introduce objective misalignment, increase model complexity, or incur substantial training overhead, especially in Sinkhorn-based routing methods. In this paper, we revisit the token-to-expert assignment as an optimal transport problem. We add constraints to ensure balanced expert utilization. We show that even minimal optimal transport-based routing improves SMoE performance without requiring auxiliary balancing losses. Unlike prior approaches, our method derives gating scores directly from the transport map, leading to more balanced and effective token-to-expert assignments. Building on this insight, we introduce Selective Sinkhorn Routing (SSR), a lightweight routing mechanism that replaces complex auxiliary losses with efficient Sinkhorn-based routing while preserving flexible expert selection. Experiments on language modeling and image classification show that SSR improves training efficiency, accuracy, and robustness to input corruption.
\end{abstract}
\section{Introduction}

Foundation models have rapidly advanced across language \cite{Vaswmani2017_attention, Brown2020_GPT-3, Colin2020_T5}, vision \cite{Alexey2020_ViT, liu2021_Swin, Riquelme2021_vmoe}, and multimodal learning \cite{Bin2024_MoELLaVa, Rasheed2024_GLaMM}. A common way to improve their performance is to scale model capacity, but dense scaling substantially increases computational cost and inference latency. Sparse Mixture-of-Experts (SMoE) models \cite{shazeer2017_outrageous, lepikhin2020_gshard, fedus2022_switch} provide an efficient alternative by activating only a small subset of specialized experts for each token, thereby increasing capacity while limiting computation.

Conventional SMoE models use Softmax-based routing, assigning each token to the top-$k$ experts based on gating scores. This can cause routing collapse, where only a small subset of experts is frequently selected while others remain underutilized \cite{chi2022_representation-collapse}. Prior work mitigates this issue with auxiliary load-balancing losses \cite{fedus2022_switch, lepikhin2020_gshard}, but such losses may introduce training instability \cite{zoph2022_stmoe}. Alternatively, OT-based routing methods use Sinkhorn algorithms to obtain balanced expert assignments without auxiliary losses \cite{liu2022_OT_MoE}, though they may reduce routing flexibility because the gating matrix is not directly optimized through gradient-based learning \cite{liu2024_router}.

In contrast to prior Sinkhorn-based approaches \citep{kool2021_unbiased, clark2022_scaling-law, liu2024_router}, which use the transport map only to select the top-$k$ experts, we also use its values to assign weights to each selected expert. We derive routing weights from transport-map values rather than from the conventional gating-score matrix, leveraging Sinkhorn’s built-in expert balancing in SMoE. This design improves expert balancing, supported by both theory and empirical results. We introduce Selective Sinkhorn Routing (SSR), a lightweight routing mechanism that replaces auxiliary losses with minimal Sinkhorn-based optimization. Applying SSR to only 0.1\%--1\% of training steps per epoch yields faster convergence, higher accuracy, and greater robustness to input corruption. We also show that although Sinkhorn routing and the noise-addition trick improve training performance, they degrade inference performance.
This work makes the following principal contributions:
\vspace{-0.5em}
\begin{itemize}
    \setlength{\itemsep}{-0.4em}
    \setlength{\topsep}{-0.4em}
    \item We propose a routing framework for SMoE models that integrates entropy-regularized optimal transport with stochastic noise injection to promote balanced expert utilization and improve training stability.
    \item We provide theoretical results showing that Sinkhorn-based routing and noise injection aid training by encouraging exploration and expert balancing, but should be disabled at inference to ensure consistent, deterministic routing.
    \vspace{-0.3em}
    \item We conduct extensive evaluations on language modeling and vision tasks, demonstrating that the proposed approach outperforms existing methods.
\end{itemize}
\section{Related Work}
\label{sec:ablation}

\paragraph{Sparse Mixture of Experts}
Sparse Mixture-of-Experts \citep{shazeer2017_outrageous, du2022_glam, fedus2022_switch} has become a core backbone in deep learning, increasing model capacity while preserving computational efficiency. Compared to densely activated models, SMoE can improve performance across tasks without excessive compute \citep{lepikhin2020_gshard, zhou2022_expert-choice}. By activating only a subset of experts per input, SMoE substantially increases parameter count without increasing FLOPs per example. This sparsity supports large-scale training that remain cost-effective at inference, making SMoE attractive for applications such as language modeling and machine translation.
\vspace{-2em}
\paragraph{Routing in SMoE}
In SMoE architectures, the router assigns tokens to experts. A common approach uses a gating network with a softmax to produce a distribution over experts for each token \citep{shazeer2017_outrageous}. To balance token load, load-balancing losses \citep{shazeer2017_outrageous, lepikhin2020_gshard, fedus2022_switch} or injected noise \citep{shazeer2017_outrageous} are often used. Despite improving utilization, they introduce extra hyperparameters and training complexity. Sinkhorn-based routing instead optimizes token--expert assignments via the Sinkhorn algorithm \citep{clark2022_scaling-law, cai2024_survey}. ST-MoE \citep{zoph2022_stmoe} adds a $z$-loss penalizing overly large gating logits for numerical stability.
\vspace{-1.5em}
\section{Preliminaries}
\label{sec:preliminaries}


This section provides the foundation on SMoE models, followed by the background of the Sinkhorn-Knopp Algorithm.
\subsection{Sparse Mixture of Experts}
\label{subsec:SMoE}


An SMoE model comprises multiple MoE blocks, each containing a set of experts. Within each block, experts process different aspects of the input, and their outputs are combined to form the block output. Let $\mathbf{X} \in \mathbb{R}^{m \times d}$ denote the matrix of $m$ input token embeddings, and let $\mathbf{W}_g \in \mathbb{R}^{n \times d}$ denote the gating weight matrix for $n$ experts. The gating score matrix $\mathbf{S} \in \mathbb{R}^{m \times n}$ is computed as
$\mathbf{S} = \mathbf{X}\mathbf{W}_g^\top$.


Each entry $s_{i,j}$ in $\mathbf{S}$ is a compatibility score between token $i$ and expert $j$, indicating how suitable expert $j$ is for processing token $i$. Larger scores imply a stronger preference for routing token $i$ to expert $j$. For each token $i \in \{1,\dots,m\}$, we select the top-$k$ experts with the highest scores, denoted by the index set $\mathcal{T}_i \subseteq \{1,\dots,n\}$ with $|\mathcal{T}_i|=k$. The routing weight from token $i$ to expert $j$ is defined as
\[
w_{i,j}=
\begin{cases}
\displaystyle \frac{\exp(s_{i,j})}{\sum_{j' \in \mathcal{T}_i}\exp(s_{i,j'})}, & \text{if } j \in \mathcal{T}_i,\\[6pt]
0, & \text{otherwise.}
\end{cases}
\]
Each expert $j$ is represented by a feedforward network $f_j:\mathbb{R}^d \rightarrow \mathbb{R}^d$. Token $i$'s output is aggregated as
$\mathbf{y}_i = \sum_{j \in \mathcal{T}_i} w_{i,j}\, f_j(\mathbf{x}_i)$.

\subsection{Sinkhorn-Knopp Algorithm}
Given two probability vectors $r \in \R^m, s \in \R^n$ and a cost matrix $\mathbf{C} \in \mathbb{R}^{m \times n}$, the entropy-regularized optimal transport (OT) problem (with entropic regularization parameter $\xi > 0$) seeks a transport plan $\boldsymbol{\hat{\Pi}} \in \mathbb{R}^{m \times n}$ that minimizes:
\begin{equation}
 \boldsymbol{\hat{\Pi}} = \underset{\boldsymbol{\Pi} \in \mathbb{R}^{m \times n}}{\arg\min} \langle \boldsymbol{\Pi}, \mathbf{C} \rangle + \xi \langle \boldsymbol{\Pi}, \log \boldsymbol{\Pi} \rangle, 
\end{equation}
\vspace{-0.8em}
\begin{equation}
\text{subject to: } \boldsymbol{\Pi} > 0, \ \ \boldsymbol{\Pi} \mathbf{1}_n = r, \ \ \boldsymbol{\Pi}^\top \mathbf{1}_m = s.
\end{equation}

This can be solved efficiently by the Sinkhorn-Knopp algorithm \cite{Cuturi2013_Sinkhorn} (see~\cref{alg:sinkhorn_alg}), which alternates between scaling rows and columns of a kernel matrix $\mathbf{K} = \exp(-\mathbf{C} / \xi)$ to match the marginals $r$ and $s$.









\begin{algorithm}[h]
\small
\captionsetup{font=small}
\caption{Iterative Sinkhorn-Knopp Algorithm}
\label{alg:sinkhorn_alg}
\KwIn{Cost matrix $\mathbf{C} \in \R^{m \times n}$, $r \in \R^m ,s \in \R^n$, $\xi, \delta > 0$, number of iterations $\eta \in \N^{+}$.}
\KwOut{Optimal matrix $\boldsymbol{\Pi}^* \in \R^{m \times n}$.}
$\mathbf{u}^{(0)} = 1_m, \mathbf{K} = \exp(-\mathbf{C}/\xi)$.

\For{$each \; k = 1\; to\; \eta$}{
$\mathbf{v}^{(k)} = s \,\oslash \mathbf{K}^\top\mathbf{u}^{(k-1)}$; \hfill // Element-wise division

$\mathbf{u}^{(k)} = r \, \oslash \mathbf{K}\mathbf{v}^{(k)}$; \hfill // Element-wise division

\uIf{$\|\boldsymbol{\Pi} \mathbf{1}_n - r\| < \delta$ and $\|\boldsymbol{\Pi}^\top \mathbf{1}_m - s\| < \delta$}{
break;
}
}

$\boldsymbol{\hat{\Pi}} = \operatorname{diag}(\mathbf{u}^{(k)}) \, \mathbf{K} \, \operatorname{diag}(\mathbf{v}^{(k)})$.
\end{algorithm}

\vspace{-1em}
\section{Methodology}
\label{Method}
In this section, we propose a novel method to balance token allocation in SMoE models via an optimal-transport-based token-to-expert assignment mechanism. 
A key challenge is that it requires reasonably good gating scores to produce meaningful transport maps. To address this, we concurrently employ standard Softmax gating during training to update the gating weight matrices. This joint training mechanism lets transport-based routing leverage updated gating scores while preserving its assignment process. Unless otherwise stated, all notation follows \cref{sec:preliminaries}. Proofs are provided in the Appendix.


\subsection{Token-to-expert Assignment as an Entropy-regularized Optimal Transport Problem}
\label{sec:method_token_to_expert_assignment}



To achieve effective load balancing, we formulate token-to-expert assignment as an entropy-regularized maximum-cost optimal transport (OT) problem that assigns tokens to experts by maximizing overall compatibility. We propose two approaches to construct the transport cost matrix $\mathbf{C} \in \mathbb{R}^{m \times n}$ from the gating score matrix $\mathbf{S}$: (1) \textbf{Raw gating scores (linear cost)}: $\mathbf{C}=\mathbf{S}$; (2) \textbf{Normalized scores (softmax cost)}: apply a row-wise softmax to $\mathbf{S}$, i.e., $\mathbf{C}_{i,:}=\mathrm{softmax}(\mathbf{S}_{i,:})$. The softmax cost prevents kernel entries from becoming excessively large, since unbounded values can cause exponential growth and numerical overflow during Sinkhorn updates.

Building on this motivation, we further impose balancing constraints across experts, leading to an entropy-regularized maximum-cost OT problem. This problem can be efficiently solved using the Sinkhorn algorithm, formulated as:
\vspace{-0.5em}
\begin{align}
\label{OT_lb}
\boldsymbol{\hat{\Pi}} 
= \underset{\boldsymbol{\Pi} \in \mathbb{R}^{m \times n}}{\arg\max}
\left\{
\langle \boldsymbol{\Pi}, \mathbf{C} \rangle
- \xi \langle \boldsymbol{\Pi}, \log \boldsymbol{\Pi} \rangle
\right\},
\end{align}
\vspace{-0.6em}
subject to
\begin{equation}
\text{(C1) } \boldsymbol{\Pi} > 0, \quad
\text{(C2) } \boldsymbol{\Pi} \mathbf{1}_n = \mathbf{1}_m, \quad
\text{(C3) } \boldsymbol{\Pi}^\top \mathbf{1}_m = \tfrac{m}{n} \mathbf{1}_n.
\end{equation}
We clarify those constraints as follows. (C1): All routing probabilities must be positive to form well-defined entropy term. (C2): Each token routes its entire mass to experts; each row of $\boldsymbol{\Pi}$ sums to $1$, so all token information is preserved. (C3): Each expert receives the same expected total load; each column of $\boldsymbol{\Pi}$ sums to $m/n$.
Unlike existing Sinkhorn Token Choice routers~\cite{kool2021_unbiased, clark2022_scaling-law}, which use the transport plan only for expert selection, we directly use $\boldsymbol{\hat{\Pi}}$ to compute routing weights. This approach is theoretically supported by \cref{prop:optimal_transport_routing}. Specifically, for each token $i$, we select the top-$k$ experts with the largest entries in the $i$-th row of $\boldsymbol{\hat{\Pi}}$. 
Let token $i$ be assigned to experts $E_{i_1}, E_{i_2}, \ldots, E_{i_k}$, corresponding to the $k$ largest entries in the $i$-th row of $\boldsymbol{\hat{\Pi}}$. The compatibility score between token $i$ and expert $E_{i_r}$ is given by $\boldsymbol{\hat{\Pi}}_{i, i_r}$, where $r \in \{1, 2, \ldots, k\}$. 

\begin{proposition}
\label{prop:optimal_transport_routing}
Let $\boldsymbol{\hat{\Pi}} \in \mathbb{R}^{m \times n}$ be the solution to the entropy-regularized optimal transport problem in Eq.~\ref{OT_lb}. For each token $i \in \{1, \ldots, m\}$, suppose we are allowed to assign it to at most $k$ experts, with routing weights ${\alpha}_i \in \mathbb{R}^n$ satisfying:
{
\setlength{\abovedisplayskip}{0.5pt}
\setlength{\belowdisplayskip}{0.5pt}
\begin{equation}
    \alpha_{i, j} \geq 0, \quad \text{supp}(\alpha_i) \leq k, \quad \sum_{j=1}^n \alpha_{i, j} = 1.
\end{equation}
}
We select the top-$k$ experts $E_{i_1}, E_{i_2}, \ldots, E_{i_k}$ with the highest transport scores $\boldsymbol{\hat{\Pi}}_{i, i_1}$, $\boldsymbol{\hat{\Pi}}_{i, i_2}, \ldots, \boldsymbol{\hat{\Pi}}_{i, i_k}$, and set the weights as:
$\alpha_{i, i_r} = \frac{\boldsymbol{\hat{\Pi}}_{i, i_r}}{\sum_{j=1}^{k} \boldsymbol{\hat{\Pi}}_{i, i_j}} \quad \text{for } r = 1, \ldots, k$.
Then, $\alpha_i$ is the optimal solution of the optimization problem $\min\limits_{{\alpha}_i} \mathrm{KL}(\alpha_i \,\|\, \boldsymbol{\hat{\Pi}}_{i})$.

\end{proposition}

\subsection{Selective Sinkhorn Routing for Sparse Mixture of Experts}



A key limitation of Sinkhorn-based token-to-expert assignment is that it does not update the gating weight matrix $\mathbf{W}_g$. While this preserves token information during routing, $\mathbf{W}_g$ is decoupled from the computational graph because it is not included in the OT objective. As a result, the gating score matrix $\mathbf{S}$ is not directly optimized for token--expert compatibility, weakening the semantic meaning of the transport map and its role as a compatibility-aware supervision signal. Thus, the OT formulation is meaningful only when $\mathbf{S}$ reliably captures token--expert compatibility.


\begin{proposition}[Load Balancing Is Valid During Training but Not Inference]
\label{prop:sinkhorn_for_training}

Let $\mathcal{X}$ be the input space and $p(x)$ the data distribution. Let $g: \mathcal{X} \rightarrow \mathbb{R}^n$ be a gating function producing expert scores $\mathbf{S}(x) \in \mathbb{R}^n$, and let $f: \mathbb{R}^{m \times n} \rightarrow \mathbb{R}^{m \times n}$ be a routing function based on entropy-regularized optimal transport. For a batch $\{x_1, \dots, x_m\}$, the transport plan $\boldsymbol{\Pi} \in \mathbb{R}^{m \times n}$ satisfies:
\begin{equation}
\text{(C1) } \boldsymbol{\Pi} > 0 \nonumber, \quad \text{(C2) } \boldsymbol{\Pi} \mathbf{1}_n = \mathbf{1}_m \nonumber, \quad \text{(C3) } \boldsymbol{\Pi}^\top \mathbf{1}_m = \tfrac{m}{n} \mathbf{1}_n.
\end{equation}
Then:


\textbf{(1) (Training)} When the batch $\{x_1, \dots, x_m\}$ consists of i.i.d.\ samples from $p(x)$, the average routing approximates the expected routing $\mathbb{E}_{x \sim p(x)}[\pi(x)]$. Constraint (C3) encourages uniform expert usage under $p(x)$.

\textbf{(2) (Inference)} When a single input $x$ is processed, there is no meaningful approximation to $p(x)$, so enforcing (C3) forces uniform routing regardless of the actual score $\mathbf{S}(x)$, which leads to distorted or suboptimal expert assignment.

\end{proposition}

To address this issue, we propose \emph{Selective Sinkhorn Routing} (SSR). During training, each MoE block uses Sinkhorn routing with probability $p \in [0,1]$ and Softmax gating otherwise. This hybrid strategy allows Softmax gating to train $\mathbf{W}_g$, enabling $\mathbf{S}$ to learn meaningful token--expert compatibilities, while Sinkhorn routing promotes balanced expert utilization and improves training stability. We denote the Linear-cost and Softmax-cost variants as SSR-L and SSR-S, respectively. The overall procedure is illustrated in \cref{fig:methodology}.

\begin{algorithm}[h]
\small
\captionsetup{font=small}
\caption{Selective Sinkhorn Routing (SSR) in an MoE block during training}
\label{alg:ssr}
\KwIn{Gating scores $\mathbf{S} \in \mathbb{R}^{m \times n}$, Sinkhorn probability $p$, top-$k$ experts $k$.}
\KwOut{Routing weights $\boldsymbol{\alpha} \in \mathbb{R}^{m \times n}$.}

Set $\alpha_{i,j} \leftarrow 0$ for all $i = 1,\ldots,m$ and $j = 1,\ldots,n$\;

Draw $\tau \sim \mathcal{U}(0, 1)$\;

\uIf{$\tau < p$}{
    Compute transport plan $\boldsymbol{\hat{\Pi}}$ from $\mathbf{S}$ using Sinkhorn algorithm\;

    \For{each token $i = 1$ to $m$}{
        Select $k$ experts $E_{i_1}, \ldots, E_{i_k}$ with highest
        $\left\{ \boldsymbol{\hat{\Pi}}_{i,j} \right\}_{j=1}^n$\;

        \For{$r = 1$ to $k$}{
            $\alpha_{i,i_r} \leftarrow
            \dfrac{\boldsymbol{\hat{\Pi}}_{i,i_r}}
            {\sum_{j=1}^k \boldsymbol{\hat{\Pi}}_{i,i_j}}$\;
        }
    }
}
\Else{
    \For{each token $i = 1$ to $m$}{
        Select $k$ experts $E_{i_1}, \ldots, E_{i_k}$ with highest
        $\left\{ \mathbf{S}_{i,j} \right\}_{j=1}^n$\;

        \For{$r = 1$ to $k$}{
            $\alpha_{i,i_r} \leftarrow
            \dfrac{e^{\mathbf{S}_{i,i_r}}}
            {\sum_{j=1}^k e^{\mathbf{S}_{i,i_j}}}$\;
        }
    }
}
\end{algorithm}

However, the Softmax-based routing may still suffer from expert collapse, in which only a small subset of experts is consistently selected due to high gating scores \cite{cai2024_survey}. To address this issue, during training, we can add Gaussian noise to the cost matrix (SSR with noise) to encourage exploration and prevent expert underutilization. In particular, the noisy cost matrix is computed as:
\begin{equation}
    \tilde{\mathbf{C}} = \mathbf{C} + \alpha_{\text{noise}} \cdot \boldsymbol{\epsilon}, \quad \boldsymbol{\epsilon} \in \mathbb{R}^{m \times n}, \quad \boldsymbol{\epsilon}_{i,j} \sim \mathcal{N}(0, \sigma^2).
\end{equation}

\cref{alg:ssr} desribes the training-time behavior of an SMoE block under SSR. Compared with auxiliary-loss methods, SSR avoids objective misalignment by removing additional balancing losses and instead encouraging balanced expert utilization through entropy-regularized optimal transport. Compared with existing Sinkhorn-based routing, SSR preserves the OT interpretation while reducing overhead via sparse Sinkhorn updates. Unlike trainable noise-injection methods, SSR w/ noise introduces no additional parameters, maintaining simplicity without sacrificing performance.

\begin{proposition}[Noise Ensures Every Expert Has Nonzero Selection Probability]
\label{prop:noise_positive_selection}
For each token, let $g = (g_1, \dots, g_n) \in \mathbb{R}^n$ be the cost to $n$ experts, and let the perturbed costs be: $
\tilde{g}_i = g_i + \alpha_{\text{noise}} \epsilon_i, \quad \epsilon_i \sim \mathcal{N}(0, \sigma^2)$. Let $P_i = \mathbb{P}(\tilde{g}_i > \tilde{g}_j \text{ for all } j \ne i)$ denote the probability that expert $i$ is selected as the top-1 expert after noise is added.

Then:
$$
P_i = \prod_{j \ne i} \Phi\left( \frac{g_i - g_j}{\sqrt{2} \sigma \alpha_{\text{noise}}} \right),
$$
where \( \Phi(z) = \int_{-\infty}^z \frac{1}{\sqrt{2\pi}} e^{-t^2/2} dt \) is the CDF of the standard normal distribution.

\end{proposition}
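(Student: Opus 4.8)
The plan is to reduce the joint top-$1$ event to a family of pairwise comparisons and evaluate each one via the Gaussian CDF. First I would record the marginal law of each perturbed cost: since $\epsilon_i \sim \mathcal{N}(0,\sigma^2)$ independently across $i$, we have $\tilde{g}_i = g_i + \alpha_{\text{noise}}\epsilon_i \sim \mathcal{N}(g_i, \alpha_{\text{noise}}^2\sigma^2)$, and the family $\{\tilde{g}_i\}_{i=1}^n$ is mutually independent. The event that expert $i$ wins is the intersection $\bigcap_{j \ne i}\{\tilde{g}_i > \tilde{g}_j\}$, so $P_i = \mathbb{P}\!\left(\bigcap_{j\ne i}\{\tilde{g}_i > \tilde{g}_j\}\right)$.

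The core computation is the pairwise probability. For fixed $j \ne i$, the difference $\tilde{g}_i - \tilde{g}_j = (g_i - g_j) + \alpha_{\text{noise}}(\epsilon_i - \epsilon_j)$ is Gaussian, and by independence of $\epsilon_i$ and $\epsilon_j$ the variances add, giving $\tilde{g}_i - \tilde{g}_j \sim \mathcal{N}\!\left(g_i - g_j,\ 2\alpha_{\text{noise}}^2\sigma^2\right)$. Standardizing this difference and using $\Phi(z) = 1 - \Phi(-z)$ yields
\begin{equation}
\mathbb{P}(\tilde{g}_i > \tilde{g}_j) = \Phi\!\left(\frac{g_i - g_j}{\sqrt{2}\,\sigma\,\alpha_{\text{noise}}}\right),
\end{equation}
which already accounts for the $\sqrt{2}$ in the denominator of the claimed expression. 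Forming the product of these factors over $j \ne i$ reproduces the stated formula.

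The step I expect to be the crux is passing from the pairwise factors to the joint probability as a product. The events $\{\tilde{g}_i > \tilde{g}_j\}_{j\ne i}$ all involve the common variable $\tilde{g}_i$, so they are not mutually independent; the fully rigorous top-$1$ probability is the mixture integral $P_i = \mathbb{E}_{\tilde{g}_i}\!\big[\prod_{j\ne i}\Phi((\tilde{g}_i - g_j)/(\alpha_{\text{noise}}\sigma))\big]$, obtained by conditioning on $\tilde{g}_i$ and integrating against its density, rather than the product of the marginal pairwise probabilities. I would therefore present the factorized form under the explicit working assumption that the pairwise dominance events are treated as independent, flagging that the two expressions coincide only approximately (e.g., in the symmetric case $g_1=\dots=g_n$ the product underestimates the true value). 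Crucially, the conclusion the proposition is actually used for — that every expert retains strictly positive selection probability — is robust to this simplification: since $\Phi(z) > 0$ for every finite $z$ and the Gaussian density is strictly positive, both the product form and the exact integral are strictly positive for all finite $g$, so noise injection guarantees that no expert is ever deterministically excluded.
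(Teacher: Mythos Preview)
Your argument follows the paper's exactly: compute the pairwise difference $\tilde{g}_i - \tilde{g}_j \sim \mathcal{N}(g_i - g_j,\, 2\alpha_{\text{noise}}^2\sigma^2)$, read off $\mathbb{P}(\tilde{g}_i > \tilde{g}_j) = \Phi\!\big((g_i - g_j)/(\sqrt{2}\,\sigma\,\alpha_{\text{noise}})\big)$, and then take the product over $j \ne i$. The paper justifies the last step with the single sentence ``Independence of the noise implies'' the product form, and does not discuss the shared dependence on $\tilde{g}_i$ that you flag.

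You have therefore been more careful than the paper, not less. Your observation that the events $\{\tilde{g}_i > \tilde{g}_j\}_{j\ne i}$ are \emph{not} mutually independent (because they all involve $\epsilon_i$) is correct, and the exact expression is indeed the conditional integral $\mathbb{E}_{\tilde{g}_i}\big[\prod_{j\ne i}\Phi((\tilde{g}_i - g_j)/(\alpha_{\text{noise}}\sigma))\big]$ rather than the product of marginals. The paper's stated identity is thus only an approximation, and your proposal already contains everything in the paper's proof plus an accurate diagnosis of its weak step. Your closing remark --- that the downstream conclusion $P_i > 0$ holds under either expression because $\Phi(z) \in (0,1)$ for all finite $z$ --- is precisely how the proposition is used in the paper, so the gap you identify does not affect the application.
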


\begin{figure}
    \centering
    \includegraphics[width=0.78\linewidth]{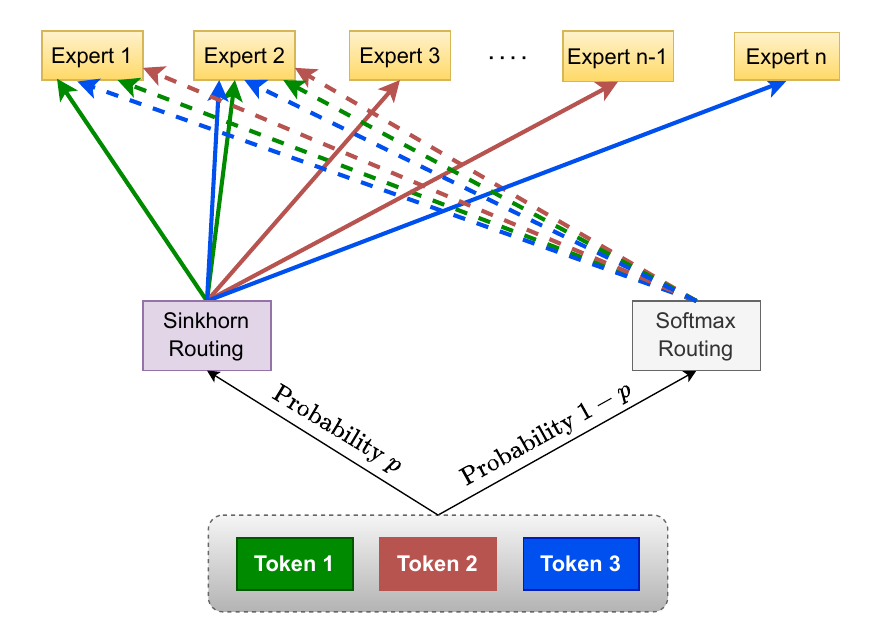}
    \caption{\textbf{Training of SSR.} At each layer during the forward pass, a fixed probability $p \in [0,1]$ is introduced. We randomly choose Sinkhorn-based routing with probability $p$ and Softmax-based routing with probability $1-p$.}
    \label{fig:methodology}
\vspace{-0.2cm}    
\end{figure}

By \cref{prop:sinkhorn_for_training,prop:noise_positive_selection}, both Sinkhorn routing and noise injection improve training by promoting expert load balancing. In SSR-S, \cref{prop:noise_positive_selection} gives \(P_i \ge \Phi^n\!\left(-\frac{1}{\sqrt{2}\,\sigma\,\alpha_{\text{noise}}}\right)\), which depends only on the noise hyperparameter. 
At inference, we disable both mechanisms and use deterministic Softmax routing to avoid stochastic or batch-dependent behavior.

\vspace{-1em}
\section{Experiments}
\vspace{-0.5em}
This section presents the experimental setup, compares SSR with current state-of-the-art balancing baselines, and provides ablation studies. Unless otherwise specified, all experiments are conducted under the conventional SMoE setting. We additionally report Momentum-enhanced results on WikiText-103 and selected ablations to evaluate the compatibility of SSR with Momentum dynamics.
\vspace{-1em}
\subsection{Settings}
\paragraph{Datasets}
We evaluate SSR on both text and vision tasks using six datasets: WikiText-103 \cite{merity2017_wikitext} and enwik8 \cite{mahoney2011large} for language modeling, and ImageNet-1K \cite{Deng2009_imagenet}, ImageNet-A, ImageNet-O \cite{hendrycks2021natural}, and ImageNet-R \cite{hendrycks2021many} for vision robustness evaluation.
\vspace{-1em}
\paragraph{Baselines} 
For text tasks, we use the Switch Transformer architecture \cite{fedus2022_switch}; for vision tasks, we use Swin Transformer \cite{liu2021_Swin}. We compare against Vanilla SMoE (SMoE w/o balancing), SMoE with load balancing loss (SMoE w/ lb loss) \cite{lepikhin2020_gshard}, SMoE with (trainable) noise injection and load balancing loss (SMoE w/ noise) \cite{shazeer2017_outrageous}, SMoE with load balancing loss and z-loss (SMoE w/ z-loss) \cite{zoph2022_stmoe}, and Sinkhorn-based SMoE \cite{liu2024_router}.


\begin{table}[t]
\vspace{-0.5em}
\centering
\caption{Perplexity (PPL) of SSR variants and baseline models on clean and attacked WikiText-103 under conventional and momentum settings. The $\Delta$ column reports the test PPL difference relative to Vanilla SMoE within the corresponding setting. Time overhead is measured relative to Vanilla SMoE in each setting.}
\label{tab:wikitext-103-unified}
\renewcommand\tabcolsep{4pt}
\resizebox{0.95\columnwidth}{!}{%
\begin{tabular}{lcccccc}
\toprule
\multirow{3}{*}{Model/Metric} 
& \multicolumn{3}{c}{Clean WikiText-103} 
& \multicolumn{2}{c}{Attacked WikiText-103}
& \multicolumn{1}{c}{Efficiency} \\
\cmidrule(lr){2-4} \cmidrule(lr){5-6} \cmidrule(lr){7-7}
& Valid PPL $\downarrow$ 
& Test PPL $\downarrow$ 
& \shortstack{$\Delta$ Test PPL \\ vs. Baseline}
& Valid PPL $\downarrow$ 
& Test PPL $\downarrow$
& \shortstack{Training Time \\ Overhead $\downarrow$} \\
\midrule

\multicolumn{7}{l}{\textbf{Conventional setting}} \\
\midrule

Vanilla SMoE 
& 33.760
& 35.550
& -- 
& 42.240 
& 44.190 
& 0.00\% \\

SMoE w/ lb loss 
& 33.248
& 34.952 
& \textcolor{green!70!black}{-0.598} 
& 41.763 
& 43.758 
& 2.04 \% \\

SMoE w/ z-loss 
& 33.242 
& 35.091
& \textcolor{green!70!black}{-0.459} 
& 42.298 
& 44.406 
& 2.12\% \\

SMoE w/ noise 
& 33.196 
& 35.072
& \textcolor{green!70!black}{-0.478}
& 41.432
& 43.692 
& 2.50\% \\

Sinkhorn-based SMoE 
& 33.301
& 35.316
& \textcolor{green!70!black}{-0.234} 
& 42.056 
& 44.351 
& 72.47\% \\

\rowcolor{gray!20}
SSR-L (Ours) 
& 32.809 
& 34.573 
& \textcolor{green!70!black}{-0.977} 
& 41.230 
& 43.318 
& 0.33\% \\

\rowcolor{gray!20}
SSR-S (Ours) 
& \textbf{32.610} 
& 34.744 
& \textcolor{green!70!black}{-0.806} 
& \textbf{41.012} 
& 43.283 
& 0.37\% \\

\rowcolor{gray!20}
SSR-L w/ noise (Ours) 
& \underline{32.767} 
& \textbf{34.367} 
& \textcolor{green!70!black}{-1.183} 
& 41.164 
& \underline{43.159} 
& 0.62\% \\

\rowcolor{gray!20}
SSR-S w/ noise (Ours) 
& 32.881 
& \underline{34.557}
& \textcolor{green!70!black}{-0.993} 
& \underline{41.037} 
& \textbf{42.941} 
& 0.65\% \\

\midrule
\multicolumn{7}{l}{\textbf{Momentum setting}} \\
\midrule

Vanilla SMoE
& 31.861 
& 33.712 
& -- 
& 39.721 
& 41.756 
& 0.00\% \\

SMoE w/ lb loss 
& 32.561 
& 34.490 
& \textcolor{red!70!black}{+0.778} 
& 40.811 
& 42.998 
& 2.49\% \\

SMoE w/ z-loss 
& 32.542 
& 34.235 
& \textcolor{red!70!black}{+0.523} 
& 40.665 
& 42.614 
& 4.44\% \\

SMoE w/ noise
& 32.490 
& 34.077 
& \textcolor{red!70!black}{+0.365} 
& 40.299 
& 42.204 
& 6.75\% \\

Sinkhorn-based SMoE 
& 32.576 
& 34.305 
& \textcolor{red!70!black}{+0.593} 
& 40.810 
& 42.803 
& 67.52\% \\

\rowcolor{gray!20}
SSR-L (Ours) 
& \underline{31.834} 
& \underline{33.305} 
& \textcolor{green!70!black}{-0.407} 
& \underline{39.690} 
& \textbf{41.351} 
& 0.36\% \\

\rowcolor{gray!20}
SSR-S (Ours) 
& 31.888 
& 33.559 
& \textcolor{green!70!black}{-0.153} 
& 40.259 
& 42.104 
& 0.61\% \\

\rowcolor{gray!20}
SSR-L w/ noise (Ours) 
& 31.911 
& \textbf{33.223} 
& \textcolor{green!70!black}{-0.489} 
& 39.847 
& \underline{41.627} 
& 1.58\% \\

\rowcolor{gray!20}
SSR-S w/ noise (Ours) 
& \textbf{31.820} 
& 33.338 
& \textcolor{green!70!black}{-0.374} 
& \textbf{39.677} 
& 41.639 
& 1.76\% \\

\bottomrule
\end{tabular}
}
\end{table}

\begin{table}[t]
\vspace{-0.1in}

\centering
\caption{Top-1 and Top-5 accuracy on ImageNet-1K and robustness benchmarks: Top-1 accuracy on ImageNet-A (IM-A) and ImageNet-R (IM-R), and AUPR on ImageNet-O (IM-O).}
\label{table:vision_main}
\renewcommand\tabcolsep{6.5pt}
\resizebox{0.45\textwidth}{!}{
\begin{tabular}{lcccccc}
\toprule
\multirow{3}{*}{Model} 
& \multicolumn{3}{c}{ImageNet-1K} 
& \multicolumn{3}{c}{Robustness Benchmarks} \\
\cmidrule(lr){2-4} \cmidrule(lr){5-7}
& Top-1 $\uparrow$ 
& \shortstack{$\Delta$ Top-1 \\ vs. Vanilla} 
& Top-5 $\uparrow$ 
& IM-A $\uparrow$ 
& IM-O $\uparrow$ 
& IM-R $\uparrow$ \\
\midrule
Vanilla SMoE    
& 75.052 & -- & 92.302 & 6.852 & 50.690 & 30.713 \\

SMoE w/ noise            
& 75.148 & \textcolor{green!70!black}{+0.096} & 92.356 & 7.000 & \underline{50.730} & 30.657 \\

Swin-MoE                 
& 75.322 & \textcolor{green!70!black}{+0.270} & \underline{92.578} & \underline{7.093} & 50.460 & \underline{31.743} \\

\rowcolor{gray!20}
SSR-L (Ours)             
& \underline{75.402} & \textcolor{green!70!black}{+0.350} & 92.528 & 6.600 & \textbf{51.040} & 30.863 \\

\rowcolor{gray!20}
SSR-L w/ noise (Ours)    
& \textbf{77.420} & \textcolor{green!70!black}{+2.368} & \textbf{93.566} & \textbf{9.760} & 50.530 & \textbf{33.903} \\

\bottomrule
\end{tabular}
\vspace{-0.1in}
}
\end{table}

\vspace{-1em}
\paragraph{Evaluation Metrics and Analysis}
For text domain, we report token-level perplexity (PPL) on WikiText-103 \& byte-level bits-per-character (BPC) on Enwik-8, using zero-shot evaluation on adversarial inputs. For images, models trained on ImageNet-1K are evaluated on a clean validation set and on adversarial, OOD, and real-world filtered sets, using Top-1/Top-5 accuracy and AUPR, see Appendix for details.


\subsection{Main results}

We compare our methods with several baselines across multiple benchmarks. In each evaluation, the \textbf{best result is in bold}, and \underline{the second-best is underlined}. In the main tables, the $\Delta$ column reports the gap relative to Vanilla SMoE:
$
\Delta(\text{method}) = \text{Performance(method)} - \text{Performance(Vanilla SMoE)}.
$
A negative $\Delta$ indicates improvement for PPL and BPC, while a positive $\Delta$ indicates improvement for vision tasks (Top-1 accuracy). Thus, the sign of $\Delta$ depends on the metric: lower is better for PPL/BPC, whereas higher is better for Top-1 accuracy. The symbol ``--'' denotes no change (i.e., Vanilla SMoE).

We first evaluate SSR on WikiText-103 under both the conventional and Momentum settings \cite{teo2024_momentumsmoe}. In the conventional setting, the backbone follows an interleaved Attention--SMoE architecture with separate residual connections for each block. As shown in \cref{tab:wikitext-103-unified}, SSR consistently outperforms Vanilla SMoE and existing balancing methods. In particular, SSR reduces test perplexity by \textcolor{green!70!black}{1.183 PPL} relative to Vanilla SMoE, achieving \textbf{more than twice} the improvement of prior balancing approaches, while incurring \textbf{substantially lower} training-time overhead.

Under the Momentum setting, SMoE can be viewed through a multi-objective optimization lens, where the output norm of each SMoE block generally decreases across layers, with a slight increase in the final layer due to gradient-descent overshooting \cite{teo2024_momentumsmoe}. As shown in \cref{fig:norm_change}, SSR preserves this characteristic norm evolution and closely matches the layer-wise trajectory of Vanilla SMoE, indicating compatibility with Momentum dynamics. Notably, SSR improves Vanilla SMoE by \textcolor{green!70!black}{0.489 PPL}, while prior balancing methods degrade performance by up to \textcolor{red!70!black}{0.778 PPL}. Together with the norm analysis in \cref{fig:norm_change}, this suggests that existing balancing strategies may disrupt Momentum-induced optimization dynamics, leading to less stable trajectories, whereas SSR preserves the norm behavior of Vanilla SMoE while improving perplexity. Overall, SSR improves performance in both settings and trains faster than competing balancing mechanisms.
\begin{figure}[h]
    \centering
\includegraphics[width=0.85\linewidth]{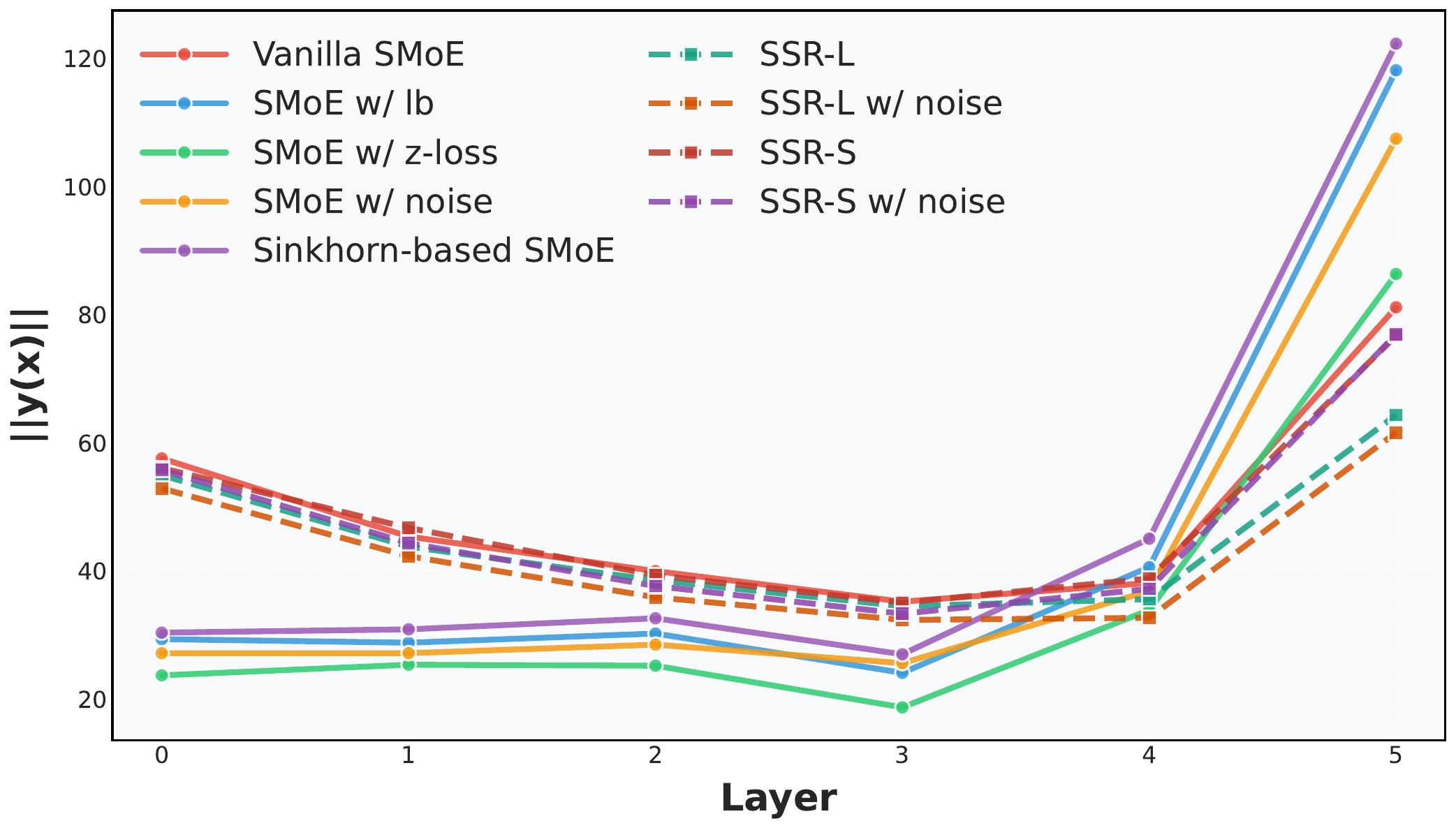}
    \caption{Average output norms across layers on the WikiText-103 validation set under the momentum setting.}
    \label{fig:norm_change}
\end{figure}
\vspace{-1.3em}

\cref{tab:wkt103_inference} compares four inference strategies: inference without balancing (\emph{No balancing}), inference with Sinkhorn-based routing using a fixed probability $p$ (\emph{w/ Sinkhorn}), inference with Gaussian noise (\emph{w/ noise}), and inference with both Sinkhorn routing and noise (\emph{w/ both}). We find that, at inference time, disabling load balancing yields the best results: both SSR variants consistently outperform the other inference strategies, consistent with \cref{prop:sinkhorn_for_training}.

\begin{table}[h]
    \centering
{
    \caption{Comparison of different Inference techniques of SSR-L/S w/ noise on clean/attacked WikiText-103.}
    \label{tab:wkt103_inference}
\renewcommand\tabcolsep{4pt}
\resizebox{0.8\columnwidth}{!}{%
    \begin{tabular}{lcccc} 
    \toprule
      \multirow{2}{*}{Inference} & \multicolumn{2}{c}{Clean WikiText-103} & \multicolumn{2}{c}{Attacked WikiText-103} \\
      \cmidrule(lr){2-3}\cmidrule(lr){4-5}
      & Valid PPL $\downarrow$ & Test PPL $\downarrow$ & Valid PPL $\downarrow$ & Test PPL $\downarrow$ \\
    \midrule
        \multicolumn{5}{c}{SSR-L w/ noise} \\
    \midrule
    No balancing (Ours) & \textbf{31.871} & \textbf{33.395} & \textbf{40.083} & \textbf{41.885} \\
    \midrule
    w/ Sinkhorn & \underline{31.883} & 33.411 & \underline{40.095} & 41.905 \\ 
    w/ noise & \underline{31.883} & \underline{33.407} & 40.099 & \underline{41.904} \\
    w/ both & 31.894 & 33.424 & 40.111 & 41.926 \\
\midrule
    \multicolumn{5}{c}{SSR-S w/ noise} \\
    \midrule
    No balancing (Ours) & \textbf{32.371} & \textbf{33.444} & \textbf{40.842} & \textbf{42.312} \\
    \midrule
    w/ Sinkhorn & \underline{32.377} & \underline{33.447} & \underline{40.847} & \underline{42.315} \\ 
    w/ noise & 32.392 & 33.452 & 40.870 & 42.317 \\
    w/ both & 32.407 & 33.461 & 40.886 & 42.325 \\
    \bottomrule
    \end{tabular}
}
}
\end{table}

\begin{figure}[t]
    \centering

    \includegraphics[width=1\linewidth]{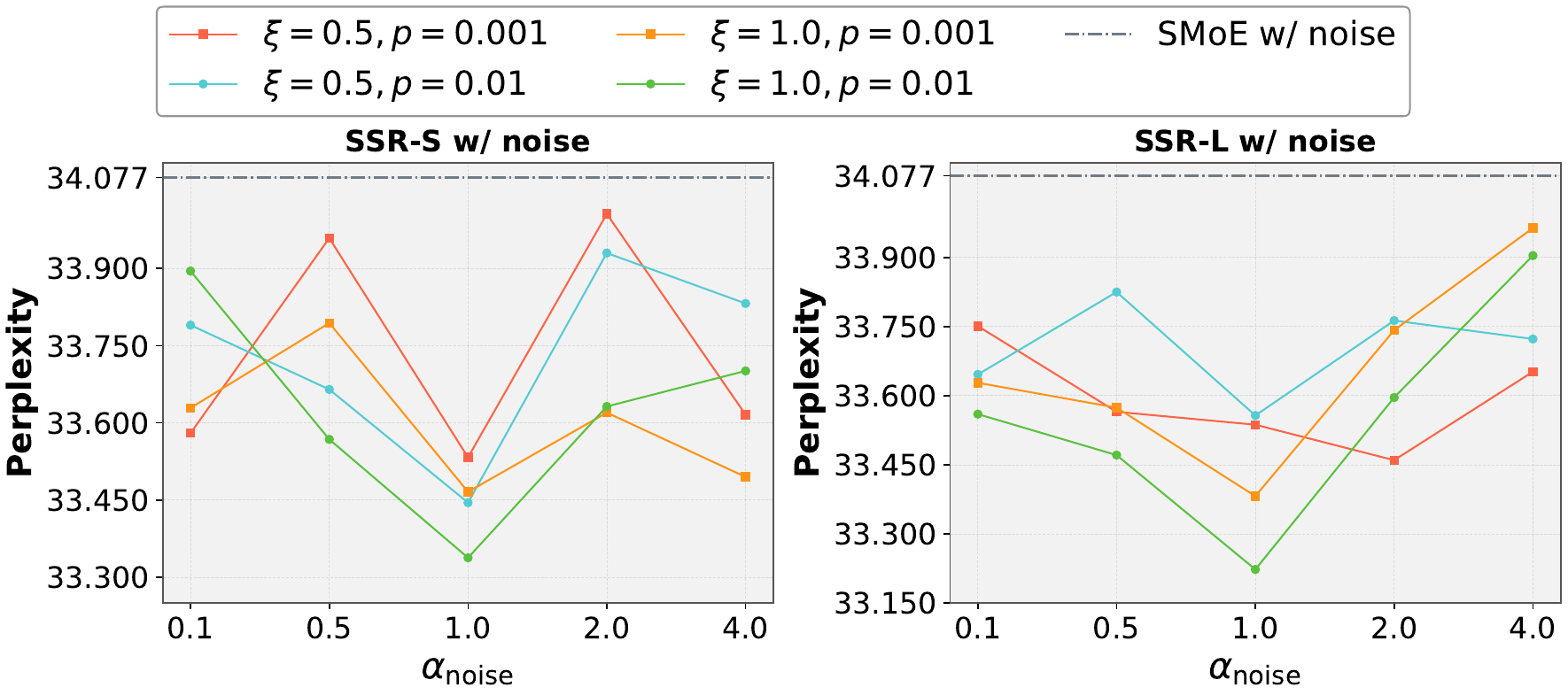}
    \caption{Performance of SSR w/ noise on Wikitext-103 under varying $\xi$ and $\alpha_{\text{noise}}$ values in comparison with SMoE w/ noise.}
    \vspace{-1em}
    \label{fig:ssr_alpha_noise}
\end{figure}

Next, we evaluate SSR on byte-level language modeling with enwik8 (\cref{tab:enwik8-main}). Particularly, SSR variants consistently outperform the baselines. SSR-S w/ noise achieves the best test BPC of 1.128, improving Vanilla SMoE by 0.010 BPC, which is twice the gain of the strongest prior baseline, Sinkhorn-based SMoE. It is also \textbf{2.28$\times$} faster than Sinkhorn-based SMoE, improving both performance and training efficiency (\cref{fig:training_time_enwik8}). SSR-L w/ noise obtains the second-best result, further confirming the effectiveness of SSR for byte-level modeling. In contrast, SMoE w/ noise underperforms Vanilla SMoE, highlighting the limited robustness of existing balancing methods when considering both token- and byte-level language modeling.

Finally, we evaluate SSR in the vision domain (See \cref{table:vision_main}). Due to the high computational cost of Sinkhorn-based methods, we compare against Vanilla SMoE, SMoE with noise, and Swin-MoE. SSR-L achieves strong results on ImageNet-1K and the best performance on ImageNet-O. Adding noise further improves performance across the remaining metrics, underscoring the benefit of noise in the cost matrix.

\begin{table}[ht]
\vspace{-1em}
    \centering
{
    \caption{Bits-per-character (BPC) of SSR-L/S w/ noise vs. SMoE-based \& Sinkhorn-based baselines on the Enwik-8 datasets.}
    \label{tab:enwik8-main}
\renewcommand\tabcolsep{4pt}
\resizebox{0.8\columnwidth}{!}{%
    \begin{tabular}{lccc} 
    \toprule
      \multirow{2}{*}{Model/Metric} & \multicolumn{2}{c}{Enwik-8} & \multirow{2}{*}{\shortstack{$\Delta$ Test BPC vs. \\ vanilla SMoE}}\\
      \cmidrule(lr){2-3}
      & Valid BPC $\downarrow$ & Test BPC $\downarrow$  \\
      \midrule
      {Vanilla SMoE} & 1.147 & 1.138 & -- \\
      {SMoE w/ noise } & 1.154 & 1.147 & \textcolor{red!70!black}{+0.009}\\
      {SMoE w/ aux. loss } & 1.145 & 1.136 & \textcolor{green!70!black}{-0.002} \\
      {Sinkhorn-based SMoE } & 1.143 & \underline{1.133} & \textcolor{green!70!black}{-0.005} \\
      \rowcolor{gray!20}
      {SSR-L w/ noise (Ours)} & \underline{1.141} & \underline{1.133} & \textcolor{green!70!black}{-0.005} \\
      \rowcolor{gray!20}
      {SSR-S w/ noise (Ours)} & \textbf{1.136} & \textbf{1.128} & \textcolor{green!70!black}{-0.010}\\
      \midrule
    \end{tabular}
    }
}

\end{table}

\vspace{-1em}
\subsection{Ablation Studies}
\label{sec:ablation}
\subsubsection{Effect of probability $p$}

We evaluate SSR-L w/ noise under different routing probabilities $p$ to study their effect on SMoE performance in \cref{tab:wkt103_p_w_noise}. As shown in the table, $p$ has a clear impact on performance. On the clean WikiText-103 split, $p=0.001$ achieves the best validation and test PPL, suggesting that a very small fraction of Sinkhorn-based routing is sufficient to provide an effective balancing signal without the conventional auxiliary loss, while on the attacked split, the best performance is obtained with $p=0.0001$. Overall, SSR-L w/ noise benefits from a small but carefully chosen routing probability: overly large $p$ may over-constrain routing, while overly small $p$ weakens the balancing effect.



\begin{table}[ht]
\vspace{-0.1in}
    \centering
{\small
    \caption{Perplexity (PPL) of 1-Head SSR-L w/ noise with different probability $p$ on clean/attacked WikiText-103.}
    \label{tab:wkt103_p_w_noise}
    \scalebox{0.7}{
    \begin{tabular}{lcccc} 
    \toprule
      \multirow{2}{*}{$p$} 
      & \multicolumn{2}{c}{Clean WikiText-103} 
      & \multicolumn{2}{c}{Attacked WikiText-103} \\
      \cmidrule(lr){2-3} \cmidrule(lr){4-5}
      & Valid PPL $\downarrow$ 
      & Test PPL $\downarrow$ 
      & Valid PPL $\downarrow$ 
      & Test PPL $\downarrow$ \\
    \midrule
      {$0.1$} 
      & 32.874 
      & \underline{34.519} 
      & \underline{41.100} 
      & 43.183 \\

      {$0.03$} 
      & 33.254 
      & 35.194 
      & 41.898 
      & 44.211 \\

      {$0.01$} 
      & 32.942 
      & 34.563 
      & 41.159 
      & \underline{43.143} \\

      {$10^{-3}$} 
      & \textbf{32.767} 
      & \textbf{34.367} 
      & 41.164 
      & 43.159 \\

      {$10^{-4}$} 
      & \underline{32.845} 
      & 34.677 
      & \textbf{40.861} 
      & \textbf{42.905} \\

      {$10^{-4}$} 
      & 33.164 
      & 34.935 
      & 41.569 
      & 43.690 \\
    \bottomrule
    \end{tabular}}
}

\vspace{-0.2in}
\end{table}


\subsubsection{Effect of $\alpha_{\text{noise}}$ in SSR-L/S w/ noise} 
We further study the effect of gating noise in SSR-L/S w/ noise by varying $p \in \{0.001, 0.01\}$, $\xi \in \{0.5, 1\}$, and $\alpha_{\text{noise}} \in \{0.1, 0.5, 1, 2, 4\}$. We compare SSR-L/S w/ noise against SMoE w/ noise, the strongest baseline combining load balancing with Momentum in \cref{tab:wikitext-103-unified}. As shown in \cref{fig:ssr_alpha_noise}, both SSR-L and SSR-S w/ noise consistently outperform SMoE w/ noise across all tested settings, achieving the best test PPLs of 33.223 and 33.338, respectively, compared to 34.007 for SMoE w/ noise. These results demonstrate that SSR remains stable and effective under diverse noise and momentum configurations.







\subsubsection{Effect of $\xi$ in Sinkhorn Algorithm}
Next, we examine the effect of the regularization parameter $\xi$ on the performance of SSR w/ noise variants, with fixed settings of $p = 0.001$ and $\delta = 0.0001$. 
As discussed in \cref{sec:method_token_to_expert_assignment}, employing a Softmax-based cost in regularized OT improves stability compared to a linear cost by bounding the cost matrix, thereby mitigating overflow during the Sinkhorn iterations. As shown in \cref{tab:wikitext-103-different-epsilon}, smaller $\xi$ values (e.g., 0.05, 0.1) cause numerical overflow in SSR-L w/ noise (yielding \texttt{NaN}), whereas SSR-S w/ noise remains stable and continues to achieve competitive performance.

\begin{table}[ht]
    \centering
    {
    \caption{Perplexity (PPL) of 1-Head SSR-L/S w/ noise with different $\xi$ on clean/attacked WikiText-103.}
    \label{tab:wikitext-103-different-epsilon}
\renewcommand\tabcolsep{4pt}
\resizebox{0.6\columnwidth}{!}{%
    \begin{tabular}{lcccc} 
        \toprule
        \multirow{2}{*}{$\xi$} & \multicolumn{2}{c}{Clean WikiText-103} & \multicolumn{2}{c}{Attacked WikiText-103} \\
              \cmidrule(lr){2-3} \cmidrule(lr){4-5}

        & Valid PPL $\downarrow$ & Test PPL $\downarrow$ & Valid PPL $\downarrow$ & Test PPL $\downarrow$ \\
        \midrule
        \multicolumn{5}{c}{SSR-L w/ noise} \\
        \midrule
        0.05 & \texttt{NaN} & \texttt{NaN} & \texttt{NaN} & \texttt{NaN} \\
        0.1 & \texttt{NaN} & \texttt{NaN} & \texttt{NaN} & \texttt{NaN} \\
        0.5 & \underline{32.767} & \textbf{34.367} & 41.164 & 43.159 \\
        1 & 32.867 & 34.783 & 41.556 & 43.708 \\
        \midrule
        \multicolumn{5}{c}{SSR-S w/ noise} \\
        \midrule
        0.05  & 32.927 & 34.605 & 41.138 & 43.022 \\
        0.1 & \textbf{32.647} & \underline{34.442} & \textbf{40.834} & 43.059 \\
        0.5 & 32.881 & 34.557 & \underline{41.037} & \textbf{42.941} \\
        1 & 33.053 & 34.564 & 41.056 & \underline{42.945} \\
        \bottomrule
    \end{tabular}}
    }
\vspace{-1em}    
\end{table}

\vspace{-0.7em}
\section{Conclusion}

This paper proposes Selective Sinkhorn Routing, a novel method for Sparse Mixture-of-Experts models that improves expert utilization with minimal overhead. We reformulate routing as a regularized optimal transport problem with a constraint on the number of tokens per expert. 
Unlike prior methods, we derive gating scores directly from the transport map, leading to more balanced and effective token-to-expert assignments.
Our theoretical and empirical results show that applying Sinkhorn intermittently and injecting noise into the cost matrix reduces training time and improves performance compared to existing routing methods, while the modifications should be disabled at inference and add no extra cost. These results highlight the practicality and effectiveness of our approach for efficient SMoE design across both training and inference deployment.

\section*{Impact Statement}
This paper presents work whose goal is to advance the field of Machine
Learning. There are many potential societal consequences of our work, none
which we feel must be specifically highlighted here.




\nocite{langley00}

\bibliography{references}
\bibliographystyle{icml2026}

\newpage
\appendix
\clearpage

\section{Theoretical results}

\label{Appendix:Proof}

\subsection{Proposition~\ref{prop:optimal_transport_routing}}
\begin{proof}
Let \( \boldsymbol{\hat{\Pi}}_{i} = (\boldsymbol{\hat{\Pi}}_{i, 1}, \ldots, \boldsymbol{\hat{\Pi}}_{i, n}) \in \mathbb{R}^n \) be the \( i \)-th row of the solution to the entropy-regularized optimal transport problem. We aim to minimize the Kullback-Leibler (KL) divergence:

$$
\mathrm{KL}(\alpha_i \,\|\, \boldsymbol{\hat{\Pi}}_{i}) = \sum_{j=1}^n \alpha_{i, j} \log \left( \frac{\alpha_{i, j}}{\boldsymbol{\hat{\Pi}}_{i, j}} \right),
$$
where \( \alpha_{i, j} \geq 0 \), \( \sum_{j=1}^n \alpha_{i, j} = 1 \), and \( \text{supp}(\alpha_i) \leq k \). This means we want to select at most \( k \) non-zero entries for \( \alpha_i \) such that the KL divergence is minimized.


Let \( \mathcal{T}_i \subset \{1, \dots, n\} \) be the support of \( \alpha_i \), i.e., the indices where \( \alpha_{i, j} > 0 \). We know that \( |\mathcal{T}_i| \leq k \). For fixed \( \mathcal{T}_i \), the problem can be simplified to the following convex optimization problem:

$$
\min_{\substack{\alpha_j \geq 0,\, j \in \mathcal{T}_i \\ \sum_{j \in \mathcal{T}_i} \alpha_j = 1}} \sum_{j \in \mathcal{T}_i} \alpha_j \log \left( \frac{\alpha_j}{\boldsymbol{\hat{\Pi}}_{i, j}} \right).
$$

The Lagrangian for this problem is:

$$
\mathcal{L}(\alpha, \lambda) = \sum_{j \in \mathcal{T}_i} \alpha_j \log \left( \frac{\alpha_j}{\boldsymbol{\hat{\Pi}}_{i, j}} \right) + \lambda \left( \sum_{j \in \mathcal{T}_i} \alpha_j - 1 \right),
$$
where \( \lambda \) is the Lagrange multiplier. Taking the gradient with respect to \( \alpha_j \) and setting it to zero gives:

$$
\frac{\partial \mathcal{L}}{\partial \alpha_j} = \log \left( \frac{\alpha_j}{\boldsymbol{\hat{\Pi}}_{i, j}} \right) + 1 + \lambda = 0 \quad \Rightarrow \quad \alpha_j = \boldsymbol{\hat{\Pi}}_{i, j} e^{-1 - \lambda}.
$$




The KL divergence is given by:

$$
\mathrm{KL}(\alpha_i \,\|\, \boldsymbol{\hat{\Pi}}_{i}) = \sum_{j \in \mathcal{T}_i} \alpha_j \log \left( \frac{\alpha_j}{\boldsymbol{\hat{\Pi}}_{i, j}} \right).
$$

By substituting \( \alpha_j = \frac{\boldsymbol{\hat{\Pi}}_{i, j}}{\sum_{l \in \mathcal{T}_i} \boldsymbol{\hat{\Pi}}_{i, l}} \), we get:

$$
\mathrm{KL}(\alpha_i \,\|\, \boldsymbol{\hat{\Pi}}_{i}) = \sum_{j \in \mathcal{T}_i} \frac{\boldsymbol{\hat{\Pi}}_{i, j}}{\sum_{l \in \mathcal{T}_i} \boldsymbol{\hat{\Pi}}_{i, l}} \log \left( \frac{\boldsymbol{\hat{\Pi}}_{i, j}}{\sum_{l \in \mathcal{T}_i} \boldsymbol{\hat{\Pi}}_{i, l}} \cdot \frac{1}{\boldsymbol{\hat{\Pi}}_{i, j}} \right).
$$

This simplifies to:

$$
\mathrm{KL}(\alpha_i \,\|\, \boldsymbol{\hat{\Pi}}_{i}) = - \log \left( \sum_{j \in \mathcal{T}_i} \boldsymbol{\hat{\Pi}}_{i, j} \right),
$$



To minimize the KL divergence, we need to maximize the sum \( \sum_{j \in \mathcal{T}_i} \boldsymbol{\hat{\Pi}}_{i, j} \). This is achieved by selecting the top-\( k \) largest values of \( \boldsymbol{\hat{\Pi}}_{i, j} \). Therefore, the optimal support set is:

$$
\mathcal{T}_i = \operatorname{TopK}(\boldsymbol{\hat{\Pi}}_{i}, k).
$$

Thus, the optimal \( \alpha_i \) is given by:

$$
\alpha_{i, j} =
\begin{cases}
\dfrac{\boldsymbol{\hat{\Pi}}_{i, j}}{\sum_{l \in \mathcal{T}_i} \boldsymbol{\hat{\Pi}}_{i, l}}, & \text{if } j \in \mathcal{T}_i, \\
0, & \text{otherwise}.
\end{cases}
$$

This completes the proof for \cref{prop:optimal_transport_routing}.

\end{proof}









\subsection{Proposition~\ref{prop:sinkhorn_for_training}}

\begin{proof}
\textbf{(1) Training.} Let $\{x_1, \dots, x_m\} \sim p(x)$ be a batch drawn i.i.d. Let $\pi(x_i) = f(g(x_i)) \in \Delta^n$ be the routing distribution for input $x_i$, and define the batch-average routing vector:

$$
\bar{\pi} = \frac{1}{m} \sum_{i=1}^m \pi(x_i).
$$

By the law of large numbers, as $m \to \infty$, we have:

$$
\bar{\pi} \xrightarrow{\text{a.s.}} \mathbb{E}_{x \sim p(x)}[\pi(x)].
$$

Thus, enforcing constraint \textit{(C3)}:

$$
\boldsymbol{\Pi}^\top \mathbf{1}_m = \tfrac{m}{n} \mathbf{1}_n.
$$

is equivalent to enforcing:

$$
\bar{\pi} = \frac{1}{n} \mathbf{1}_n,
$$
which promotes uniform expert usage across the data distribution. This helps prevent expert underuse and encourages specialization during training.

\medskip

\textbf{(2) Inference.} At inference time, only a single input $x$ is available. Enforcing \textit{(C3)} with $m = 1$ yields:

$$
\pi(x) = \frac{1}{n} \mathbf{1}_n,
$$
which forces uniform routing, ignoring the input-specific score $\mathbf{S}(x)$. This contradicts the goal of expert specialization and leads to suboptimal predictions. Furthermore, a single sample cannot approximate $p(x)$, so the population-level balancing constraints become meaningless.

\medskip

\textbf{Conclusion.} Training with Sinkhorn routing over batches supports expert balancing over the data distribution. In contrast, inference-time routing should be purely based on the input without enforcing expert balance constraints.
\end{proof}




\subsection{Proposition~\ref{prop:noise_positive_selection}}

\begin{proof}
For any fixed $i$, we observe that:
$$
\tilde{g}_i - \tilde{g}_j = (g_i - g_j) + \alpha_{\text{noise}}(\epsilon_i - \epsilon_j).
$$
Since $\epsilon_i, \epsilon_j \sim \mathcal{N}(0, \sigma^2)$; we have $\epsilon_i - \epsilon_j \sim \mathcal{N}(0, 2\sigma^2)$.
Thus:
\begin{align}
\mathbb{P}(\tilde{g}_i > \tilde{g}_j) 
&= \mathbb{P}(\alpha_{\text{noise}}\epsilon_i - \alpha_{\text{noise}}\epsilon_j > g_j - g_i) \\
&= \mathbb{P}\left(\epsilon_i - \epsilon_j > \frac{g_j - g_i}{\alpha_{\text{noise}}}\right) \\
&= \Phi\left( \frac{g_i - g_j}{\sqrt{2} \sigma \alpha_{\text{noise}}} \right).
\end{align}
Independence of the noise implies:
$$
P_i = \prod_{j \ne i} \mathbb{P}(\tilde{g}_i > \tilde{g}_j) = \prod_{j \ne i} \Phi\left( \frac{g_i - g_j}{\sqrt{2} \sigma \alpha_{\text{noise}}} \right).
$$
Since each $g_i - g_j$ is fixed and finite, and \( \Phi(z) \in (0, 1) \) for all real $z$, it follows that:
$$
P_i > 0.
$$
Hence, $P_i$ is a positive constant that depends only on $g$ and $\sigma$, and is independent of any input-dependent softmax transformation. Moreover, in the SSR-S variant, the probability $P_i \geq \Phi^n\left( \frac{-\sqrt{2}}{\sigma \alpha_{\text{noise}}} \right)$, which depends solely on the noise hyperparameter.
\end{proof}

\section{Experimental Details}
\subsection{Dataset Details}
\paragraph{WikiText-103} \cite{merity2017_wikitext} is a large-scale collection comprising over 100 million tokens sourced from 23,805 $``$Good$"$
articles and 4,790 $``$Featured articles$"$.
Specifically, the training set includes over 103 million tokens, while the validation and test sets consist of 217,646 and 245,569 tokens, respectively. 
This dataset retains its original case, punctuation, and numerical values, resulting in a diverse vocabulary of 267,735 unique tokens. A clean version corresponds to the original dataset, while the attacked version is generated using TextAttack’s word-swap attack \cite{morris2020_textattack}, where words in the validation and test sets are randomly replaced with the generic token $``$AAA$"$. This modification increases the difficulty for the model to predict the next word in the sequence accurately.
\paragraph{Enwik-8} \cite{mahoney2011large} is a byte-level dataset comprising 100 million bytes sourced from Wikipedia. It includes not only English text but also markup, special characters, and multilingual content. The dataset is divided into 90 million bytes for training, 5 million for validation, and 5 million for testing.
\paragraph{ImageNet-1K} contains 1.28 million training images and 50,000 validation images. The model is trained to classify each input image into one of 1,000 categories. Top-1 and Top-5 accuracy are reported across all experiments.
\paragraph{ImageNet-A} \cite{hendrycks2021natural} contains real-world images that have been adversarially filtered to mislead existing ImageNet classifiers. A subset of 200 classes is selected from the original 1,000 ImageNet-1K categories, focusing on those where misclassifications would be particularly severe. These 200 classes broadly represent the diversity of categories found in ImageNet-1K.
\paragraph{ImageNet-O} \cite{hendrycks2021natural} contains adversarially filtered examples designed to challenge ImageNet out-of-distribution detectors. It consists of samples drawn from ImageNet-22K that are not part of ImageNet-1K, specifically selected because a ResNet-50 model incorrectly classifies them as ImageNet-1K categories with high confidence.
\paragraph{Imagenet-R} \cite{hendrycks2021many} contains a variety of artistic renditions of object classes originally found in ImageNet, which are typically discouraged by the standard ImageNet guidelines. ImageNet-R includes 30,000 such renditions spanning 200 classes, selected as a subset of the ImageNet-1K categories.


\subsection{Model Architecture and Training Configurations}
For language modeling, we adopt a medium-scale configuration with 6 layers for Wikitext-103 and 8 layers for Enwik-8. Each layer consists of a multi-head self-attention (MHA) block followed by a SMoE block, both with residual connections. Training is performed with a batch size of 48 for 80,000 steps, using a learning rate of 0.0007 with 4,000 warm-up steps and a dropout rate of 0.1. The model uses 8 attention heads for each MHA block and processes sequences of 512 tokens in each batch, with attention spans of 1,024 for Wikitext-103 and 2,048 for Enwik-8. The SMoE module has 16 experts with top-2 routing. The hidden and expert dimensions are 352
. The resulting model sizes are 216M parameters for WikiText‑103 and 36M for Enwik8, aligning with commonly explored scales in recent work \cite{teo2024_momentumsmoe}.

The baseline settings are customized fairly to the paper report. Specifically, for the SMoE w/ noise, we follow the noise initialization in \cite{shazeer2017_outrageous}. The auxiliary loss coefficient is set to 0.01 \cite{shazeer2017_outrageous, lepikhin2020_gshard}, and the $z$-loss coefficient to 0.001 \cite{zoph2022_stmoe}. For the Sinkhorn-based SMoE, we set $\xi =1, \delta =0.0001$ and $\eta = 100$. With SSR-variants, we consider $\xi \in \{0.05, 0.5, 1\}, \delta= 0.0001$, $\eta = 100$, $p \in \{0.0001, 0.001, 0.01\}$ and $\alpha_{\text{noise}} \in \{0.3, 1, 4\}$. 

For image classification, we use a compact 4-stage architecture with depths [2, 2, 18, 2]. The first two stages each have 2 blocks (self-attention + feed-forward); the third stage has 18 blocks, where self-attention alternates between feed-forward and MoE layers; and the final stage includes a self-attention–feed-forward block followed by a self-attention–MoE block. The embedding dimension is 96 with attention heads [3, 6, 12, 24]. We employ 32 experts for MoE layers with top-2 routing (550M parameters) and train for 60 epochs using AdamW (base LR 1.25e-4, min LR 1.25e-7, weight decay 0.1, cosine schedule), batch size 96, and an auxiliary loss coefficient of 0.1. 
For SSR-L and SSR-L w/ noise, we consider $\xi=0.5$, $\delta= 0.0001$, $\eta=100$, $p=0.01$ and $\alpha_{\text{noise}} = 1$. While prior work \cite{teo2024_momentumsmoe} has explored model sizes ranging from 36M to 388M parameters, in this paper we extend the scale further to 550M parameters, demonstrating that SSR continues to provide strong performance gains at larger scale.

\subsection{Compute Resources} All models are trained and evaluated using 2 NVIDIA A100 GPUs with 40GB of memory each. 




\section{Additional Experimental Results}

\begin{table}[ht]
    \centering
{\small
    \caption{Perplexity (PPL) of SSR-L with different probability $p$ on clean/attacked WikiText-103.}
    \label{tab:wkt103_p}
    \scalebox{0.8}{
        \begin{tabular}{lcccc} 
        \toprule
          \multirow{2}{*}{$p$} 
          & \multicolumn{2}{c}{Clean WikiText-103} 
          & \multicolumn{2}{c}{Attacked WikiText-103} \\
          \cmidrule(lr){2-3} \cmidrule(lr){4-5}
          & Valid PPL $\downarrow$ 
          & Test PPL $\downarrow$ 
          & Valid PPL $\downarrow$ 
          & Test PPL $\downarrow$ \\
        \midrule
          {$0.5$} 
          & 33.321 
          & 34.752 
          & 41.748 
          & 43.587 \\

          {$0.1$} 
          & \textbf{32.754} 
          & 34.815 
          & \textbf{40.984} 
          & \underline{43.386} \\

          {$0.03$} 
          & 32.859 
          & 34.712 
          & 41.281 
          & 43.489 \\

          {$0.01$} 
          & 33.051 
          & 34.913 
          & \underline{41.209} 
          & 43.423 \\

          {$10^{-3}$} 
          & 33.029 
          & \underline{34.710} 
          & 41.403 
          & 43.437 \\

          {$10^{-4}$} 
          & \underline{32.809} 
          & \textbf{34.573} 
          & 41.230 
          & \textbf{43.318} \\

          {$10^{-5}$} 
          & 32.992 
          & 34.794 
          & 41.573 
          & 43.821 \\
        \bottomrule
        \end{tabular}
    }
}
\end{table}

\cref{tab:wkt103_p} reports the effect of varying the probability $p$ of applying Sinkhorn routing in SSR-L. Across all tested values, SSR-L achieves lower Test PPL than Vanilla SMoE, showing that sparse Sinkhorn routing provides a useful balancing signal. The best Test PPL is obtained at $p=0.0001$ on both clean and attacked WikiText-103, suggesting that only a very small fraction of Sinkhorn-based routing is needed to improve performance. However, SSR-L still underperforms SSR-L w/ noise in \cref{tab:wikitext-103-unified}, indicating that gating noise further improves the effectiveness of SSR. Overall, these results show that $p$ should remain small: using Sinkhorn routing too frequently may over-constrain the router, while using it too rarely weakens the balancing effect.

\begin{table}[h]
    \centering
{\small
    \caption{Perplexity (PPL) of SSR-S with different probability $p$ on clean/attacked WikiText-103.}
    \label{tab:wk103_ssr-S_p}
    \scalebox{0.8}{
        \begin{tabular}{lcccc} 
        \toprule
          \multirow{2}{*}{$p$} 
          & \multicolumn{2}{c}{Clean WikiText-103} 
          & \multicolumn{2}{c}{Attacked WikiText-103} \\
          \cmidrule(lr){2-3} \cmidrule(lr){4-5}
          & Valid PPL $\downarrow$ 
          & Test PPL $\downarrow$ 
          & Valid PPL $\downarrow$ 
          & Test PPL $\downarrow$ \\
        \midrule
          {$0.1$} 
          & 33.112 
          & 35.112 
          & 41.483 
          & \underline{43.728} \\

          {$0.03$} 
          & 33.033 
          & 35.002 
          & \underline{41.434} 
          & 43.802 \\

          {$0.01$} 
          & 33.249 
          & 35.060 
          & 42.014 
          & 44.187 \\

          {$10^{-3}$} 
          & \textbf{32.610} 
          & \textbf{34.744} 
          & \textbf{41.012} 
          & \textbf{43.283} \\

          {$10^{-4}$} 
          & 33.533 
          & 35.240 
          & 42.036 
          & 44.108 \\

          {$10^{-5}$} 
          & \underline{32.905} 
          & \underline{34.804} 
          & 41.523 
          & 43.895 \\
        \bottomrule
        \end{tabular}
    }
}
\end{table}

\begin{table}[h]
    \centering
{\small
    \caption{Perplexity (PPL) of SSR-S w/ noise with different probability $p$ on clean/attacked WikiText-103.}
    \label{tab:wk103_ssr-S_w_noise_p}
    \scalebox{0.8}{
        \begin{tabular}{lcccc} 
        \toprule
          \multirow{2}{*}{$p$} 
          & \multicolumn{2}{c}{Clean WikiText-103} 
          & \multicolumn{2}{c}{Attacked WikiText-103} \\
          \cmidrule(lr){2-3} \cmidrule(lr){4-5}
          & Valid PPL $\downarrow$ 
          & Test PPL $\downarrow$ 
          & Valid PPL $\downarrow$ 
          & Test PPL $\downarrow$ \\
        \midrule
          {$0.1$} 
          & 33.212 
          & \underline{34.768} 
          & 41.568 
          & \underline{43.411} \\

          {$0.03$} 
          & 33.470 
          & 35.143 
          & 41.960 
          & 43.962 \\

          {$0.01$} 
          & 33.193 
          & 34.918 
          & 41.548 
          & 43.764 \\

          {$10^{-3}$}  
          & \textbf{32.881}
          & \textbf{34.557}
          & \textbf{41.037}
          & \textbf{42.941 }\\

          {$10^{-4}$} 
          & \underline{33.160} 
          & 34.908 
          & \underline{41.476} 
          & 43.498 \\

          {$10^{-5}$} 
          & 33.326 
          & 34.809 
          & 41.804 
          & 43.514 \\
        \bottomrule
        \end{tabular}
    }
}
\end{table}

Similarly, we evaluate SSR-S and SSR-S w/ noise across different probabilities $p$, as reported in \cref{tab:wk103_ssr-S_p} and \cref{tab:wk103_ssr-S_w_noise_p}. For both variants, the best performance is consistently achieved at $p=10^{-3}$ across clean and attacked WikiText-103. This suggests that a small amount of Sinkhorn routing is sufficient to provide an effective balancing signal for SSR-S.

Compared with SSR-S, adding gating noise further improves performance, reducing the clean Test PPL from 34.744 to 34.557 and the attacked Test PPL from 43.283 to 42.941 at the same probability $p=10^{-3}$. These results indicate that gating noise complements sparse Sinkhorn routing by improving routing robustness. Overall, $p$ should be carefully controlled: overly large values may over-constrain routing, while overly small values may weaken the balancing effect.

\begin{figure*}[t]
    \centering

    \begin{minipage}{0.4\linewidth}
        \centering
        \includegraphics[width=\linewidth]{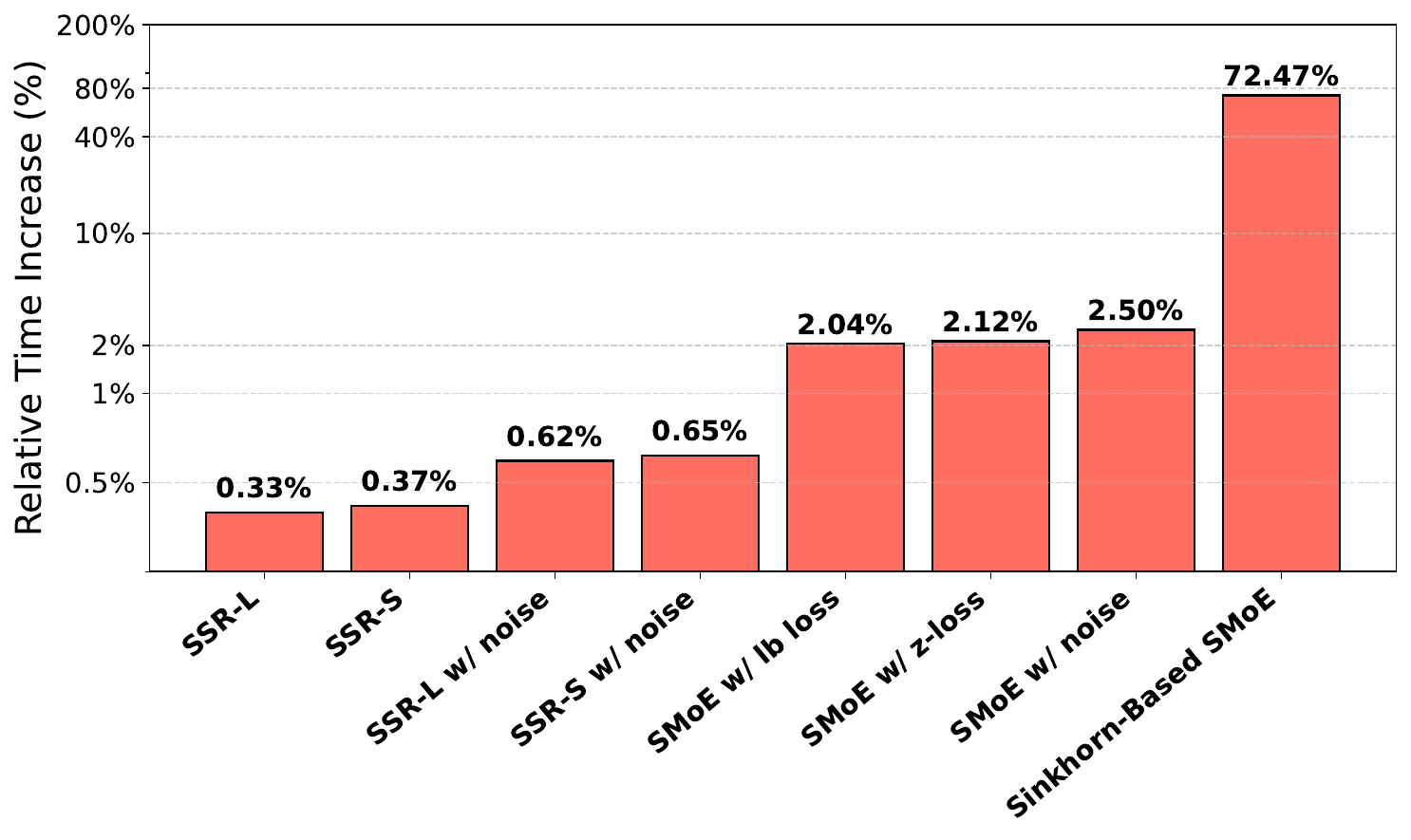}
        \vspace{-0.5em}
        
        \small{(a) Conventional setting}
        \label{fig:wiki_time_overhead_conventional}
    \end{minipage}
    \hspace{2em}
    \begin{minipage}{0.4\linewidth}
        \centering
        \includegraphics[width=\linewidth]{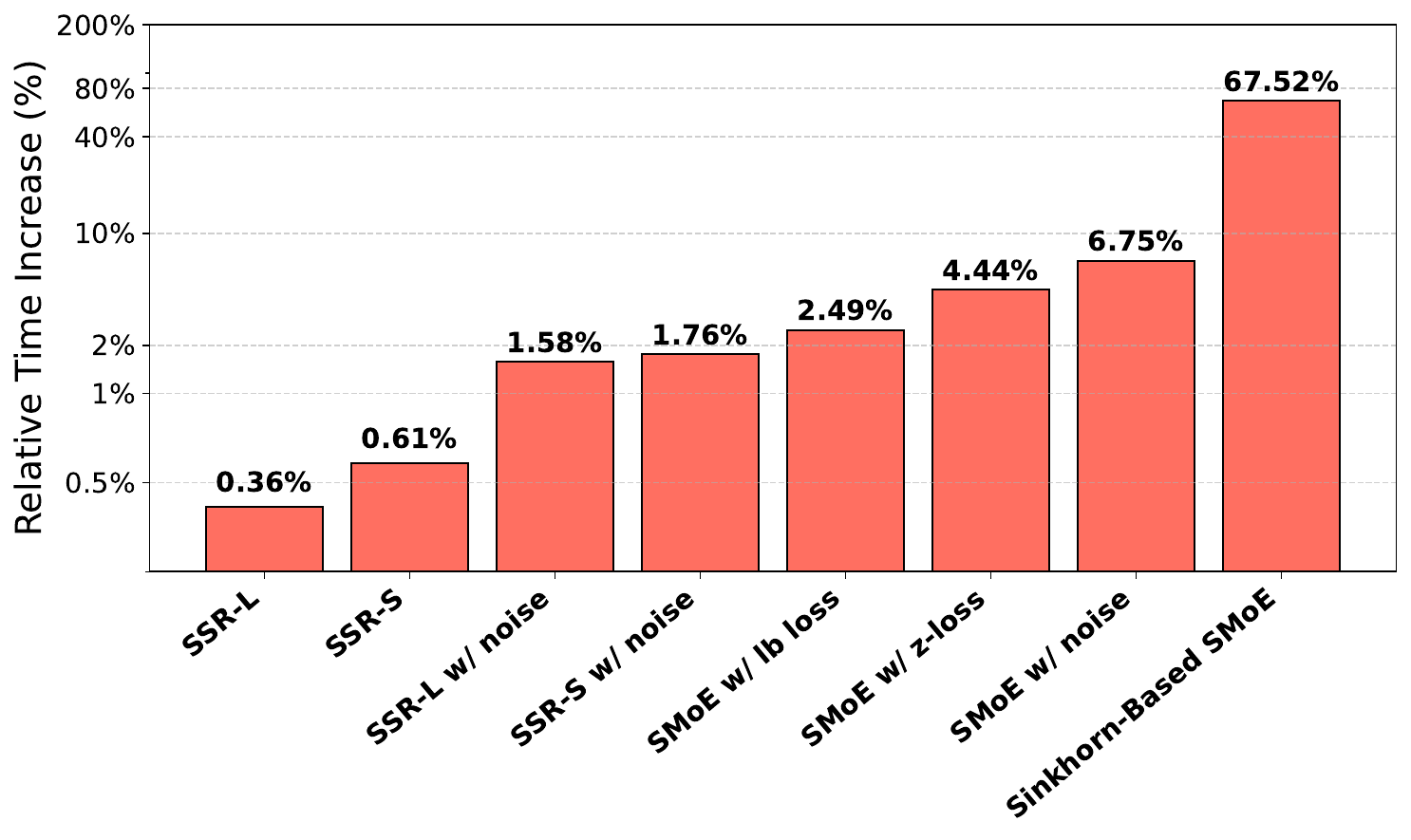}
        \vspace{-0.5em}
        
        \small{(b) Momentum setting}
        \label{fig:wiki_time_overhead_momentum}
    \end{minipage}

    \caption{Training time overhead relative to SMoE (Vanilla) on WikiText-103 under conventional and momentum settings.}
    \label{fig:wiki_training_time_overhead}
\end{figure*}

\begin{figure}[ht]
    \centering
    \includegraphics[width=\linewidth]{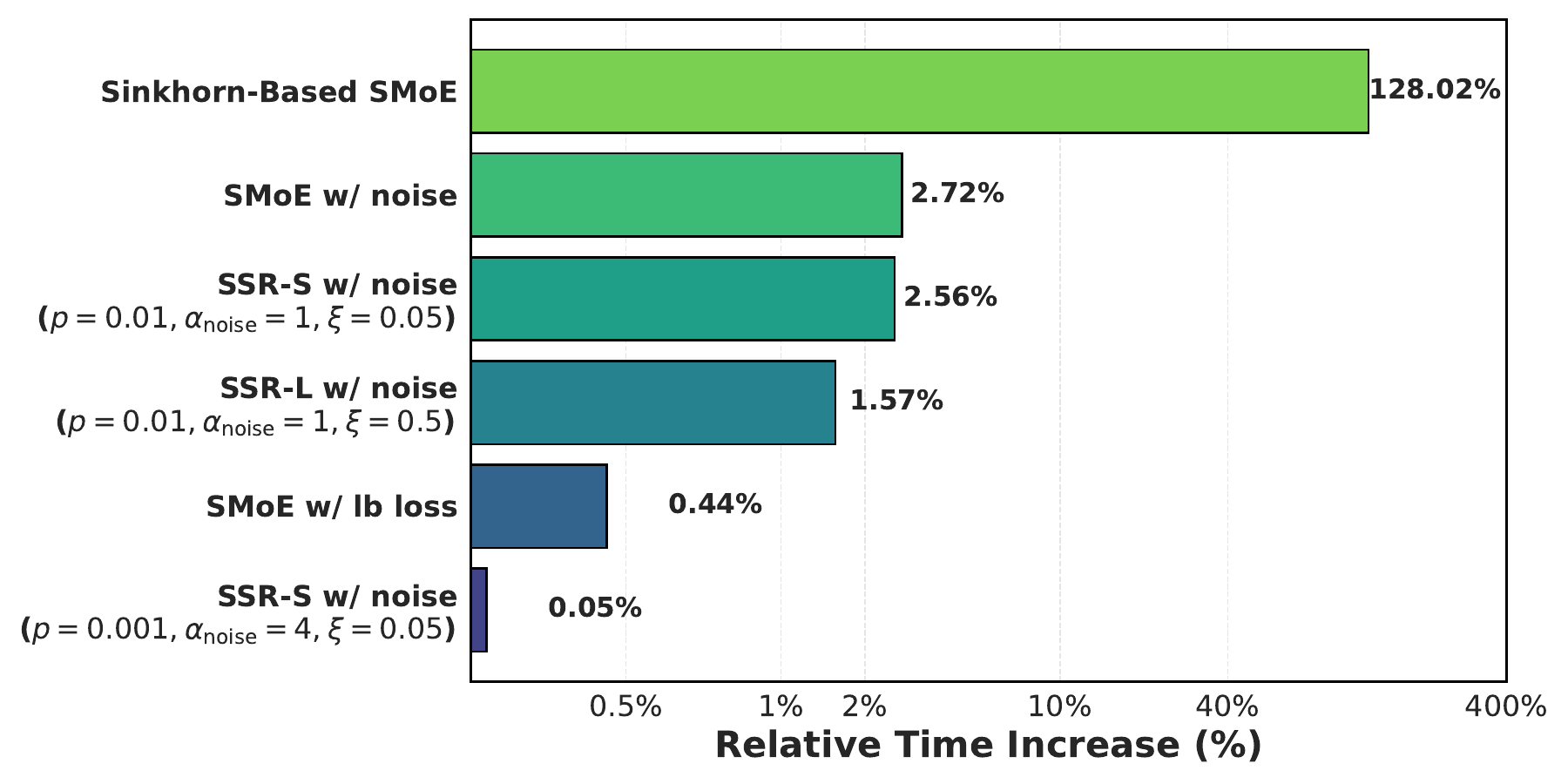}
    \caption{Training time Overhead vs. Vanilla SMoE on Enwik-8 dataset.}
    \label{fig:training_time_enwik8}
\end{figure}

\section{Detailed Derivation: Entropy-Regularized Maximum-Cost OT Solution}
    Recall the entropy-regularized maximum-cost OT problem in Eq.~\ref{OT_lb}:
\begin{align*}
& \boldsymbol{\hat{\Pi}}:=\underset{\boldsymbol{\Pi}}{\arg \max }[\langle\boldsymbol{\Pi}, \mathbf{C}\rangle- \xi\langle\boldsymbol{\Pi}, \log \boldsymbol{\Pi}\rangle], \\
& \text {s.t. }\left\{\begin{array}{l}
\boldsymbol{\Pi}>0, \\
\boldsymbol{\Pi} \mathbf{1}_n=\mathbf{1}_m, \\
\boldsymbol{\Pi}^{\top} \mathbf{1}_m=(m / n) \mathbf{1}_n,
\end{array}\right.
\end{align*}
where $\boldsymbol{\Pi}, \mathbf{C} \in \R^{m \times n}$, $\xi > 0$ and $\langle \mathbf{A}, \mathbf{B} \rangle \coloneqq \displaystyle\sum_{i = 1}^m \displaystyle\sum_{j=1}^n \mathbf{A}_{i,j}\mathbf{B}_{i,j}$. 
Let us consider the Lagrange function:
\begin{align*}
    & \mathbb{L}\left(\boldsymbol{\Pi}, \lambda, \mu\right)  = \displaystyle\sum_{i=1}^m\displaystyle\sum_{j=1}^n \boldsymbol{\Pi}_{i,j}\mathbf{C}_{i,j}- \xi\displaystyle\sum_{i=1}^m\displaystyle\sum_{j=1}^n \boldsymbol{\Pi}_{i,j}\log(\boldsymbol{\Pi}_{i,j}) \\ & + \displaystyle\sum_{i=1}^m \lambda_{i} \left(\displaystyle\sum_{j=1}^n \boldsymbol{\Pi}_{i,j} - 1 \right) + \displaystyle\sum_{j=1}^n \mu_j \left(\displaystyle\sum_{i=1}^m \boldsymbol{\Pi}_{i,j} - (m/n) \right),
\end{align*}
where $\boldsymbol{\Pi} = [\boldsymbol{\Pi}_{i,j}]_{i,j} \in \R^{m \times n}$, $\lambda = [\lambda_{1}, \ldots, \lambda_m], \mu = [\mu_1, \ldots, \mu_n]$ such that $\lambda_i, \mu_j \ge 0$ and $\displaystyle\sum_{i=1}^m  \lambda_{i} = 1, \displaystyle\sum_{j=1}^n \mu_j =1$ for $i = \overline{1, m}, j = \overline{1,n}$. Then we have some following assertions:
\begin{align*}
    &\dfrac{\partial \mathbb{L}\left(\boldsymbol{\Pi}, \lambda\right)}{\partial \boldsymbol{\Pi}_{i, j}} = \mathbf{C}_{i,j} - \xi\log\boldsymbol{\Pi}_{i,j} - \xi + \lambda_{i} + \mu_j  = 0, \\
    &\lambda_i\left(\displaystyle\sum_{j=1}^n \boldsymbol{\Pi}_{i,j} -1\right) = 0, \text{for } i = \overline{1, m},\\
    &\mu_j\left(\displaystyle\sum_{i=1}^m \boldsymbol{\Pi}_{i,j} -m/n\right) = 0, \text{for } j = \overline{1, n}. 
\end{align*}
Hence, we have
\begin{align*}
    \boldsymbol{\Pi}_{i,j} &= e^{(\mathbf{C}_{i,j} - \epsilon + \lambda_i + \mu_j)/\epsilon}.
\end{align*}
Let denote kernel matrix $\mathbf{K} = \exp(\mathbf{C}/\xi - 1_{m \times n})$, then we obtain the solution of the problem in Eq. ~\ref{OT_lb}:
\begin{align*}
    \boldsymbol{\hat{\Pi}} &= \begin{bmatrix}
        e^{\lambda_1 / \xi} & 0 & \ldots & 0 \\
        0 & e^{\lambda_2/ \xi} & \ldots & 0 \\
        \vdots & \vdots & \ddots & \vdots \\
        0 & 0 & \ldots & e^{\lambda_m/ \xi} 
    \end{bmatrix}  \mathbf{K}  \begin{bmatrix}
        e^{\mu_1/\xi} & 0 & \ldots & 0 \\
        0 & e^{\mu_2/\xi} & \ldots & 0 \\
        \vdots & \vdots & \ddots & \vdots \\
        0 & 0 & \ldots & e^{\mu_n/\xi} 
    \end{bmatrix} \\ 
    &= \text{diag}\left(e^{\lambda /\xi}\right) \,\mathbf{K} \,\text{diag}\left(e^{\mu/\xi}\right).
\end{align*}

\textbf{Conclusion.} We obtained the solution of the entropy-regularized maximum-cost OT problem. It is evident that the value range of $\boldsymbol{\hat{\Pi}}$ is directly influenced by the matrix $\mathbf{K}$, which in turn depends on the cost matrix $\mathbf{C}$. Therefore, employing either a linear or softmax cost formulation will inherently affect the solution of Eq.~\ref{OT_lb}.

\end{document}